\newcommand{\comment}[1]{}
\newtheorem{prop}{Proposition}
\newcolumntype{P}[3]{
  >{\collectcell{\colorfromval{#1}{#2}{#3}}} l <{\endcollectcell}}
\newcommand*{\colorfromval}[4]{%
  \pgfmathparse{(#4-#1)/(#2-#1)*100}%
  \global\let\colorratio\pgfmathresult
  
  \cellcolor{orange!\colorratio!white!50}%
  \tablenum[#3]{#4}}
\begin{document}
\title{Information Interaction Profile of Choice Adoption}
%
%
\author{Gael Poux-Medard\inst{1}\orcidID{0000-0002-0103-8778} \and
Julien Velcin\inst{1}\orcidID{0000-0002-2262-045X} \and
Sabine Loudcher\inst{1}\orcidID{0000-0002-0494-0169}}

\authorrunning{G. Poux-Médard et al.}
%
\institute{$^1$ Université de Lyon, ERIC EA 3083, France\\
\email{gael.poux-medard@univ-lyon2.fr}\\
\email{julien.velcin@univ-lyon2.fr}\\
\email{sabine.loudcher@univ-lyon2.fr}\\
}
\maketitle              

\begin{abstract}
Interactions between pieces of information (\textit{entities}) play a substantial role in the way an individual acts on them: adoption of a product, the spread of news, strategy choice, etc. However, the underlying interaction mechanisms are often unknown and have been little explored in the literature.
We introduce an efficient method to infer both the entities interaction network and its evolution according to the temporal distance separating interacting entities; together, they form the \textit{interaction profile}. The interaction profile allows characterizing the mechanisms of the interaction processes.
We approach this problem \textit{via} a convex model based on recent advances in multi-kernel inference. We consider an ordered sequence of exposures to entities (URL, ads, situations) and the actions the user exerts on them (share, click, decision). We study how users exhibit different behaviors according to \textit{combinations} of exposures they have been exposed to. We show that the effect of a combination of exposures on a user is more than the sum of each exposure's independent effect--there is an interaction. We reduce this modeling to a non-parametric convex optimization problem that can be solved in parallel. 
Our method recovers state-of-the-art results on interaction processes on three real-world datasets and outperforms baselines in the inference of the underlying data generation mechanisms. Finally, we show that interaction profiles can be visualized intuitively, easing the interpretation of the model.
\end{abstract}

\section{Introduction}
When told in the year 2000 that the XX$^{th}$ century was the century of physics and asked whether he agrees that the next one would be the century of biology, Stephen Hawkins answered that he believed the XXI$^{th}$ century would be the century of complexity. Be it a reasoned forecast or a tackle to promote scientific multidisciplinarity, there has been an undeniable growing interest for complex systems in research over the past decades. A complex system can be defined as a system composed of many components that interact with each other. Their study often involves network theory, a branch of mathematics that aims at modeling those interactions --that can be physical, biological, social, etc. A significant point of interest is understanding how information spreads along the edges of a network--with a particular interest in social networks. If the social network skeleton (edges, nodes) plays a significant role in such processes, recent studies pointed out that the interaction between spreading entities might also play a non-trivial role in it \cite{CorrelatedCascade,InteractingViruses}. The histograms presented Fig.\ref{fig:FigDistrib} illustrate this finding: the probability for a piece of information to be adopted (or spread) varies according to the exposure to another one at a previous time. We refer to this figure as the \textit{interaction profile}. The study of this quantity is a novel perspective: the interaction between pieces of information has been little explored in the literature, and no previous work aims at unveiling trends in the information interaction mechanisms.
\begin{figure}[h]
    \centering
    \includegraphics[width=1.\columnwidth]{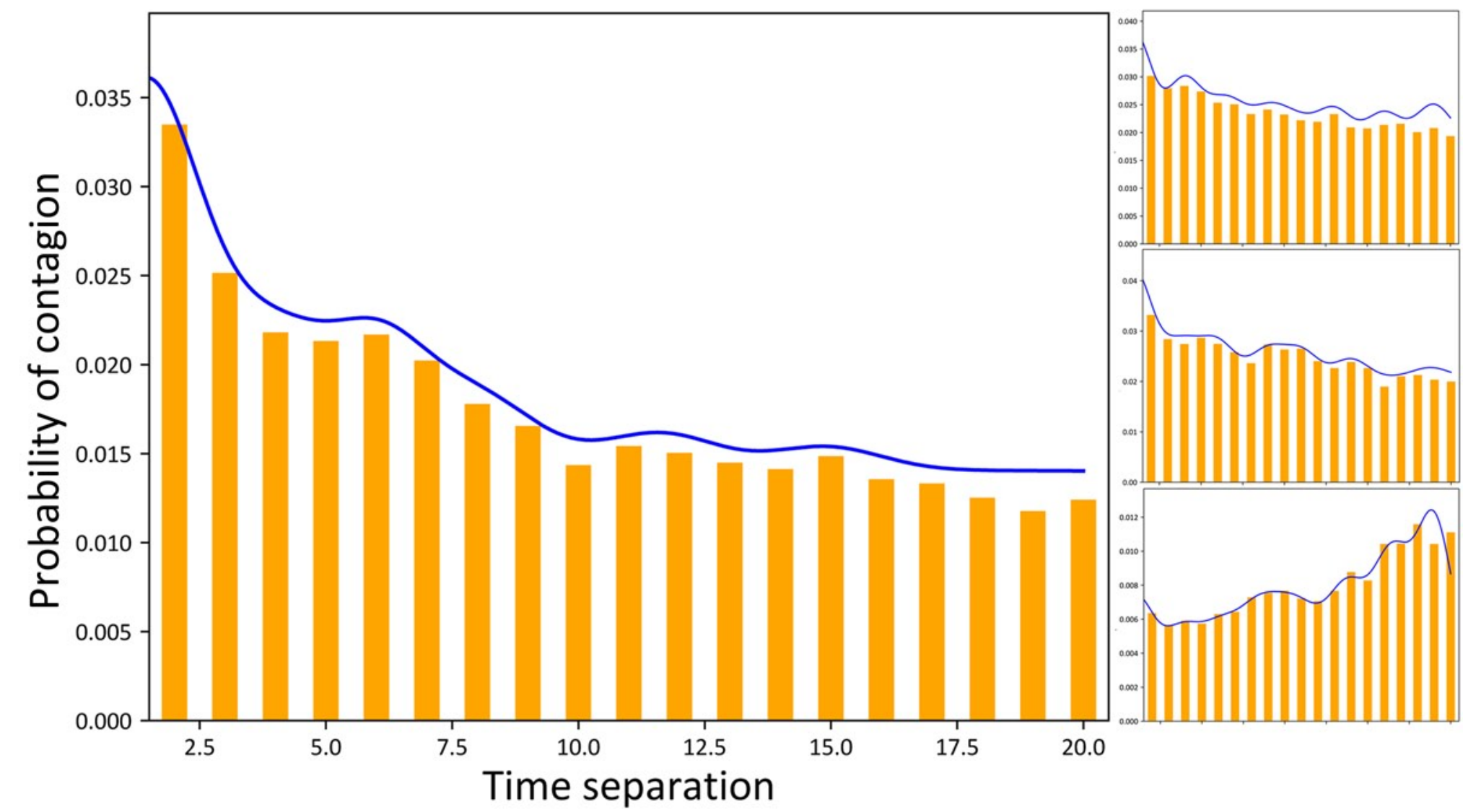}
    \caption{\textbf{Interaction profiles between pairs of entities} --- Examples of interaction profiles on Twitter; here is shown the effect of URL shortening services migre.me (left), bit.ly (right-top), tinyurl (right-middle) and t.co (right-bottom) on the probability of tweeting a t.co URL and its evolution in time. This interaction profile shows, for instance, that there is an increased probability of retweet for a t.co URL when it appears shortly after a migre.me one (interaction). This increase fades when the time separation grows (no more interaction). In blue, the interaction profile inferred by our model.}
    \label{fig:FigDistrib}
\end{figure}

The study of interactions between pieces of information (or entities) has several applications in real-world systems. We can mention the fields of recommender systems (the probability of adoption is influenced by what a user saw shortly prior to it), news propagation and control (when to expose users to an entity in order to maximize its spreading probability \cite{SpreadNewsOnlineScience}), advertising (same reasons as before \cite{AdsDataset}), choice behavior (what influences one's choice and how \cite{SergioSocialDilemma}). 

In the present work, we propose to go one step further and to unveil the mechanisms at stake within those interacting processes by inferring information interaction profiles. Let us imagine, for instance, that an internet user is exposed to an ad at time $t_1$ and to another ad for a similar product at time $t_2>t_1$. Due to the semantic similarity between the two exposures, we suppose that the exposure to the first one influences the user's sensitivity (likeliness of a click) to the second one a time $t_2-t_1$ later. Modeling this process involves quantifying the influence an ad exerts on the other and how it varies with the time separation between the exposures. The reunion of those quantities form what we call the \textit{interaction profile} --illustrated Fig.\ref{fig:FigDistrib}), that is, the influence an exposure exerts on the adoption (click, buy, choice, etc.) of another one over time.

Following this idea, we introduce an efficient method to infer both the entities (ad, tweets, products, etc.) interaction network and its evolution according to the temporal distance separating the interacting entities (the influence of an entity A on entity B will not be the same depending on whether A appeared 10 minutes or 10 hours before B). Together they form the interaction profile. 

First, we develop a method for inferring this interaction profile in a continuous-time setup using multi-kernel inference methods \cite{KernelCascade}. 
Then we show that the inference of the parameters boils down to a convex optimization problem for kernel families obeying specific properties. Moreover, the problem can be subdivided into as many subproblems as entities, which can be solved in parallel. The convexity of the problem guarantees convergence to the likelihood's global optimum for each subproblem and, therefore, to the problem's optimal likelihood. 
We apply the model to investigate the role of interaction profiles on synthetic data and in various corpora from different fields of research: advertisement (the exposure to an ad influences the adoption of other ads \cite{AdsDataset}), social dilemmas (the previous actions of one influences another's actions \cite{SergioSocialDilemma}) and information spread on Twitter (the last tweets read influence what a user retweets \cite{CoC})\footnote{Implementation codes and datasets can be found at https://anonymous.4open.science/r/f484ee48-0b47-468d-a488-6f56c85face8/}.
Finally, we provide analysis leads and show that our method recovers state-of-the-art results on interaction processes on each of the three datasets considered.

\subsection{Contributions}
The main contributions of this paper are the following:
\begin{itemize}
    \item We introduce the interaction profile, which is the combination of both the interaction network between entities and its evolution with the interaction time distance, according to the inferred kernels. The interaction profile is a powerful tool to understand how interactions take place in a given corpus (see Fig.\ref{fig:FigAnalInter}) and has not been developed in the literature. Its introduction in research is the main contribution of the present work.
    
    \item We design a convex non-parametric algorithm that can be solved in parallel, baptized InterRate. InterRate automatically infers the kernels that account the best for information interactions in time within a given kernel family. Its output is the aforementioned interaction profile.
    
    \item We show that InterRate yields better results than non-interacting or non-temporal baseline models on several real-world datasets. Furthermore, our model can recover several conclusions about the datasets from state-of-the-art works.
\end{itemize}

\section{Related work}
Previous efforts in investigating the role of interactions in information diffusion have shown their importance in the underlying spreading processes. Several works study the interaction of information with users' attention \cite{Weng2012}, closely linked to information overload concepts \cite{rodriguez2014}, but not the interaction between the pieces of information themselves. On the other hand, whereas most of the modeling of spreading processes are based on either no competition \cite{InfluenzaSpread,pouxmdard2019influential} or perfect competition \cite{WinnerTakesAll} assumption, it has been shown that relaxing this hypothesis leads to a better description of competitive spread \cite{InteractingViruses} --with the example of Firefox and Chrome web browsers, whose respective popularities are correlated. According to this finding, a significant effort has been done in elaborating complex processes to \textit{simulate} interaction \cite{Zhu2020,WinnerTakesAll} on real-world networks.

However, fewer works have been developed to tackle interaction in information spread from a machine learning point of view. The correlated cascade model \cite{CorrelatedCascade} aims to infer an interacting spreading process's latent diffusion network. In this work, the interaction is modeled by a hyper-parameter $\beta$ tuning the intensity of interactions according to an exponentially decaying kernel. In their conclusion, the authors formulate the open problem of learning several kernels and the interaction intensity parameter $\beta$, which we address in the present work.

To our knowledge, the attempt the closest to our task to model the interaction intensity parameter $\beta$ is Clash of the contagions \cite{CoC}; this aims to predict retweets on Twitter based on tweets seen by a user. This model estimates the probability of retweet for a piece of information, given the last tweets a user has been exposed to, according to their relative position in the Twitter feed. The method suffers various flaws (scalability, non-convexity). It also defines interactions based on an arguable hypothesis made on the prior probability of a retweet (in the absence of interactions) that makes its conclusions about interactions sloppy. It is worth noting that in \cite{CoC}, the authors outline the problem of the inference of the interaction profile but do so without searching for global trends such as the one shown in Fig.\ref{fig:FigDistrib}. 
Recent works address the various flaws observed in \cite{CoC} and suggest a more general approach to the estimation of the interaction intensity parameters \cite{IMMSBM}. The latter model develops a scalable algorithm that correctly accounts for interacting processes but neglects the interactions' temporal aspect. To take back the Twitter case study, it implies that in the case of a retweet at time $t$, a tweet appearing at $t_1 \ll t$ in the news feed has the same influence on the retweet as a tweet that appeared at $t_2 \approx t$. A way to relax this assumption is to integrate a temporal setting in the interaction network inference problem. 

In recent years, temporal networks inference has been a subject of interest. Significant advances have been made using survival theory modeling applied to partial observations of independent cascades of contagions \cite{NetRate,InfoPath}. 
In this context, an infected node tries to contaminate every other node at a rate that is tuned by $\beta$. 
While this work is not directly linked to ours, it has been a strong influence on the interaction profile inference problem we develop here; the problems are different, but the methodology they introduce deserves to be mentioned for its help in the building of our model (development and convexity of the problem, analogy between interaction profile and hazard rate). Moreover, advances in network inference based on the same works propose a multi-kernel network inference method that we adapted to the problem we tackle here \cite{KernelCascade}. Inspired by these works, we develop a flexible approach that allows for the inference of the best interaction profile from several candidate kernels.

\section{InterRate}
\subsection{Problem definition}
We illustrate the process to model in Figure \ref{fig:FigProc}. It runs as follows: a user is first exposed to a piece of information at time $t_0$. The user then chooses whether to act on it at time $t_0+t_{s}$ (an act can be a retweet, a buy, a booking, etc.); $t_s$ can be interpreted as the ``reaction time'' of the user to the exposure, assumed constant. The user is then exposed to the next piece of information a time $\delta t$ later, at $t_1=t_0+\delta t$ and decides whether to act on it a time $t_s$ later, at $t_1+t_{s}$, and so on. Here, $\delta t$ is the time separating two consecutive exposures, and $t_{s}$ is the reaction time, separating the exposure from the possible contagion. In the remaining of the paper, we refer to the user's action on an exposure (tweet appearing in the feed, exposure to an ad, etc.) as a contagion (retweet or the tweet, click on an ad, etc.).

\begin{figure*} 
    \centering
    \includegraphics[width=1.\textwidth]{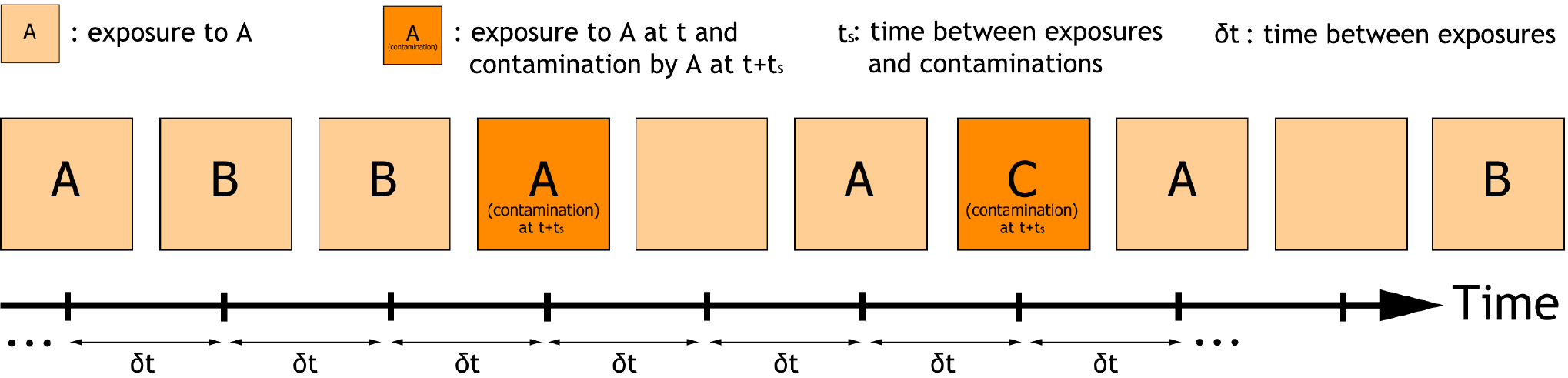}
    \caption{\textbf{Illustration of the interacting process} --- Light orange squares represent the exposures, dark orange squares represent the exposures that are followed by contagions and empty squares represent the exposures to the information we do not consider in the datasets (they only play a role in the distance between exposures when one considers the order of appearance as a time feature). A contagion occurs at a time $t_s$ after the corresponding exposure. Each new exposure arrives at a time $\delta$t after the previous one. Contagion takes place with a probability conditioned by all previous exposures. In the example, the contagion by A at time $t+t_s$ depends on the effect of the exposure to A at times $t$ and $t-3\delta t$, and to B at times $t-\delta t$ and $t-2\delta t$.}
    \label{fig:FigProc}
\end{figure*}

This choice of modeling comes with several hypotheses. 
\textbf{First}, the pieces of information a user is exposed to appear independently from each other. It is the main difference between our work and survival analysis literature: the pseudo-survival of an entity is conditioned by the random arrival of pieces of information. Therefore, it cannot be modeled as a point process. This assumption holds in our experiments on real-world datasets, where users have no influence on what information they are exposed to.
\textbf{Second} hypothesis, the user is contaminated solely on the basis of the previous exposures in the feed \cite{CoC,CorrelatedCascade}.
\textbf{Third}, the reaction time separating the exposure to a piece of information from its possible contagion, $t_s$, is constant (i.e. the time between a read and a retweet in the case of Twitter). Importantly, this hypothesis is a deliberate simplification of the model for clarity purposes; relaxing this hypothesis is straightforward by extending the kernel family, which preserves convexity and time complexity. Note that this simplification does not always hold, as shown in recent works concluding that response time can have complex time-dependent mechanisms \cite{Yu2017}.

\subsection{Likelihood}
We define the likelihood of the model of the process described in Figure \ref{fig:FigProc}. Let $t_i^{(x)}$ be the exposure to $x$ at time $t_i$, and $t_i^{(x)} + t_s$ the time of its possible contagion.
Consider now the instantaneous probability of contagion (\textit{hazard function}) $H(t_i^{(x)}+t_s \vert t_j^{(y)},\beta_{xy})$, that is the probability that a user exposed to the piece of information $x$ at time $t_i$ is contaminated by $x$ at $t_i+t_s$ given an exposure to $y$ at time $t_j \leq t_i$.
The matrix of parameters $\beta_{ij}$ is what the model aims to infer. $\beta_{ij}$ is used to characterize the interaction profile between entities.
We define the set of exposures preceding the exposure to $x$ at time $t_i$ (or history of $t_i^{(x)}$) as $\mathcal{H}_i^{(x)} \equiv \{ t_j^{(y)} \leq t_i^{(x)} \}_{j,y}$. 
Let $\mathcal{D}$ be the whole dataset such as $\mathcal{D} \equiv \{ ( \mathcal{H}_i^{(x)}, t_i^{(x)}, c_{t_i}^{(x)} ) \}_{i,x}$. Here, c is a binary variable that account for the contagion ($c_{t_i}^{(x)}=1$) or non-contagion ($c_{t_i}^{(x)}=0$) of $x$ at time $t_i+t_s$. The likelihood for one exposure in the sequence given $t_j^{(y)}$ is:

{\small
    \begin{equation*}
        \begin{split}
            &L(\mathbf{\beta}_{xy} \vert \mathcal{D}, t_s) = P(\mathcal{D} \vert \mathbf{\beta}_{xy}, t_s) =\\
            &\underbrace{H(t_i^{(x)}+t_s \vert t_j^{(y)},\beta_{xy})^{c_{t_i}^{(x)}}  }_{\text{contagion at $t_i^{(x)}+t_s$ due to $t_j^{(y)}$}}\cdot
            \underbrace{(1 - H(t_i^{(x)}+t_s \vert t_j^{(y)},\beta_{xy}))^{(1-c_{t_i}^{(x)})}}_{\text{Survival at $t_i^{(x)}+t_s$ due to $t_j^{(y)}$}}
        \end{split}
    \end{equation*}
}
The likelihood of a sequence (as defined Fig.\ref{fig:FigProc}) is then the product of the previous expression over all the exposures that happened before the contagion event $t_i^{(x)} + t_s$ e.g. for all $t_j^{(y)} \in \mathcal{H}_i^{(x)}$.
Finally, the likelihood of the whole dataset $\mathcal{D}$ is the product of $L(\mathbf{\beta}_{x} \vert \mathcal{D}, t_s)$ over all the observed exposures $t_i^{(x)}$. Taking the logarithm of the resulting likelihood, we get the final log-likelihood to maximize:
\begin{equation}
    \label{Eq:likelihood}
    \begin{split}
        \ell (\mathbf{\beta} \vert \mathcal{D},t_s) =\ \ \ \ \ \ \ \ \ \ \ & \\
        \sum_{\mathcal{D}} \ \sum_{t_j^{(y)} \in \mathcal{H}_i^{(x)}}&
        c_{t_i}^{(x)} \log \left( H(t_i^{(x)}+t_s \vert t_j^{(y)},\beta_{xy}) \right)
        \\
        +\ (1-c_{t_j}^{(y)})&\log \left( 1-H(t_i^{(x)}+t_s \vert t_j^{(y)},\beta_{xy}) \right)
    \end{split}
\end{equation}

\subsection{Proof of convexity}
The convexity of a problem guarantees to retrieve its optimal solution and allows using dedicated fast optimization algorithms.
\begin{prop}
The inference problem $\min_{\beta} -\ell (\mathbf{\beta} \vert \mathcal{D},t_s) \  \forall \beta \geq 0$, is convex in all of the entries of $\beta$ for any hazard function that obeys the following conditions: 
\begin{equation}
    \label{Eq:ConvexCond}
    \begin{cases}
        &H'^2 \geq H''H  \\
        &H'^2 \geq -H''(1-H)\\
        &H \in \left]0;1\right[
    \end{cases}
\end{equation}
where $'$ and $''$ denote the first and second derivative with respect to $\beta$, and H is the shorthand notation for $H(t_i^{(x)}+t_s \vert t_j^{(y)},\beta_{xy}) \ \forall i,j,x,y$.
\end{prop}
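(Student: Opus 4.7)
My plan is to exploit the separable structure of the log-likelihood in~(\ref{Eq:likelihood}): each summand involves only the single scalar entry $\beta_{xy}$ through $H(t_i^{(x)}+t_s\vert t_j^{(y)},\beta_{xy})$. Grouping terms by the pair $(x,y)$, the objective $-\ell$ decomposes as $\sum_{x,y} f_{xy}(\beta_{xy})$, so its Hessian with respect to the full parameter vector is diagonal. Joint convexity in all entries of $\beta$ therefore reduces to showing convexity of a single generic summand
\begin{equation*}
g(\beta) \;=\; -\,c\,\log H(\beta)\;-\;(1-c)\,\log\bigl(1-H(\beta)\bigr)
\end{equation*}
as a function of $\beta\geq 0$, for both possible values $c\in\{0,1\}$ of the binary contagion indicator, where $H(\beta)\in(0,1)$ by the third hypothesis.

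Next, I would differentiate $g$ twice in $\beta$. A direct quotient-rule computation gives
\begin{equation*}
g''(\beta) \;=\; c\,\frac{(H')^{2} - H''\,H}{H^{2}} \;+\; (1-c)\,\frac{(H')^{2} + H''\,(1-H)}{(1-H)^{2}}.
\end{equation*}
Because $H\in(0,1)$, both denominators are strictly positive, so the sign of each contribution is governed by its numerator. The first hypothesis $(H')^{2}\geq H''H$ renders the $c$-term non-negative; the second hypothesis $(H')^{2}\geq -H''(1-H)$ renders the $(1-c)$-term non-negative. Hence $g''(\beta)\geq 0$ irrespective of $c$, each summand is convex in its $\beta_{xy}$, each $f_{xy}$ is a sum of convex functions and is therefore convex, and the full objective $-\ell$ is convex on the whole parameter space. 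Since $\{\beta\geq 0\}$ is a convex set, the stated minimization is a convex program.

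I do not expect any real obstacle beyond the routine calculation of $g''$ and a careful check that the three hypotheses handle exactly the two possible values of $c$. The only conceptual point worth emphasizing is the separability observation, which is what both (i) reduces joint convexity to coordinate-wise univariate convexity, and (ii) justifies the subsequent decomposition of the optimization into independent per-entity subproblems that can be solved in parallel, as claimed later in the paper.
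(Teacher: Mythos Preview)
Your proposal is correct and follows essentially the same route as the paper: compute the second derivative of $-\log H$ and $-\log(1-H)$, verify that the two inequalities in the hypothesis make each numerator non-negative, and conclude convexity of the sum. Your explicit remark that the Hessian is diagonal because each summand depends on a single $\beta_{xy}$ is a slightly more careful statement of what the paper compresses into ``composition rules of convexity,'' but the argument is otherwise identical.
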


\begin{proof}
The negative log-likelihood as defined in Eq.\ref{Eq:likelihood} is a summation of $-\log H$ and $-\log(1-H)$; therefore $H \in \left]0;1\right[$. The second derivative of these expressions according to any entry $\beta_{mn}$ (noted $''$) reads:
\begin{equation}
    \label{Eq:DemoConvex}
    \begin{cases}
    &\left( -\log H \right)'' = \left( \frac{-H'}{H} \right)' = \frac{H'^2-H''H}{H^2} \\
    &\left( -\log(1-H) \right)'' = \left( \frac{H'}{1-H} \right)' = \frac{H'^2 + H''(1-H)}{(1-H)^2}
    \end{cases}
\end{equation}
The convexity according to a single variable holds when the second derivative positive, which leads to Eq.\ref{Eq:ConvexCond}. The convexity of the problem then follows from composition rules of convexity.
\qed
\end{proof}

A number of functions obey the conditions of Eq.\ref{Eq:ConvexCond}, such as the exponential ($e^{-\beta t}$), Rayleigh ($e^{-\frac{\beta}{2} t^2}$), power-law ($e^{-\beta \log t}$) functions, and any log-linear combination of those \cite{KernelCascade}. These functions are standard in survival theory literature \cite{GRSurvivalAnalysis}. 

The final convex problem can then be written $\min_{\beta \geq 0} -\ell (\mathbf{\beta} \vert \mathcal{D},t_s)$. An interesting feature of the proposed method is that the problem can be subdivided into N convex subproblems that can be solved independently (one for each piece of information). To solve the subproblem of the piece of information $x$, that is to find the vector $\beta_x$, one needs to consider only the subset of $\mathcal{D}$ where $x$ appears. Explicitly, each subproblem consists in maximizing Eq.\ref{Eq:likelihood} over the set of observations $\mathcal{D}^{(x)} \equiv \{ ( \mathcal{H}_i^{(x)}, t_i^{(x)}, c_{t_i}^{(x)} ) \}_i$.

\section{Experimental setup}
\subsection{Kernel choice}
\subsubsection{Gaussian RBF kernel family (IR-RBF)}
Based on \cite{KernelCascade}, we consider a log-linear combination of Gaussian radial basis function (RBF) kernels as hazard function. We also consider the time-independent kernel needed to infer the base probability of contagion discussed in the section ``Background noise in the data'' below. The resulting hazard function is then:
{\small
    \begin{equation*}
    \label{Eq:H}
    \log H(t_i^{(x)}+t_s \vert t_j^{(y)},\beta_{ij}) = -\beta_{ij}^{(bg)} - \sum_{s=0}^S \frac{\beta^{(s)}_{ij}}{2} (t_i+t_s-t_j-s)^2
    \end{equation*}
}
The parameters $\beta^{(s)}$ of Rayleigh kernels are the amplitude of a Gaussian distribution centered on time $s$. The parameter S represents the maximum time shift we consider. In our setup, we set S=20.
We think it is reasonable to assume that an exposition does not significantly affect a possible contagion 20 steps later. The parameter $\beta_{ij}^{(bg)}$ corresponds to the time-independent kernel --base probability of contagion by i. The formulation allows the model to infer complex distributions from a reduced set of parameters whose interpretation is straightforward. 

\subsubsection{Exponentially decaying kernel (IR-EXP)}
We also consider an exponentially decaying kernel that can be interpreted as a modified version of a multivariate Hawkes point process (or self-exciting point process). A multivariate Hawkes process is a point process where different classes of objects exert an influence on the point process of the others \cite{PaperHawkes}. The most common (and historical) form of such a process considers a time-decaying exponential function to model a class's influence on others. However, in our setup, we cannot consider a Hawkes process: we infer the variation of contagion probabilities in time conditionally on the earlier random exposure to a piece of information. The studied process is therefore not rigorously self-exciting. We consider the following form for the hazard function and refer to this modeling as IR-EXP instead of a Hawkes process:
{\small
\begin{equation*}
        \label{Eq:HHawkes}
        \log H(t_i^{(x)}+t_s \vert t_j^{(y)},\beta_{xy}) = -\beta_{ij}^{(bg)} -\beta_{ij} (t_i+t_s-t_j)
    \end{equation*}
}
Where $\beta_{ij}^{(bg)}$ once again accounts for the background noise in the data discussed further in this section.

\subsection{Parameters learning}
Datasets are made of sequences of exposures and contagions, as shown in Fig.\ref{fig:FigProc}. 
To assess the robustness of the proposed model, we apply a 5-folds cross-validation method. After shuffling the dataset, we use 80\% of the sequences as a training set and the 20\% left as a test set. We repeat this slicing five times, taking care that an interval cannot be part of the test set more than once. The optimization is made in parallel for each piece of information via the convex optimization module for Python CVXPY.

We also set the time separating two exposures $\delta t$ as constant. It means that we consider only the order of arrival of exposures instead of their absolute arrival time. The hypothesis that the order of exposures matters more than the absolute exposure times has already been used with success in the literature \cite{CoC}. Besides, in some situations, the exact exposure time cannot be collected, while the exposures' order is known.
For instance, in a Twitter corpus, we only know in what order a user read her feed, unlike the exact time she read each of the posts. However, from its definition, our model works the same with non-integer and non-constant $\delta t$ in datasets where absolute time matters more than the order of appearance.

\subsection{Background noise in the data}
\begin{figure}
    \centering
    \includegraphics[width=0.8\textwidth]{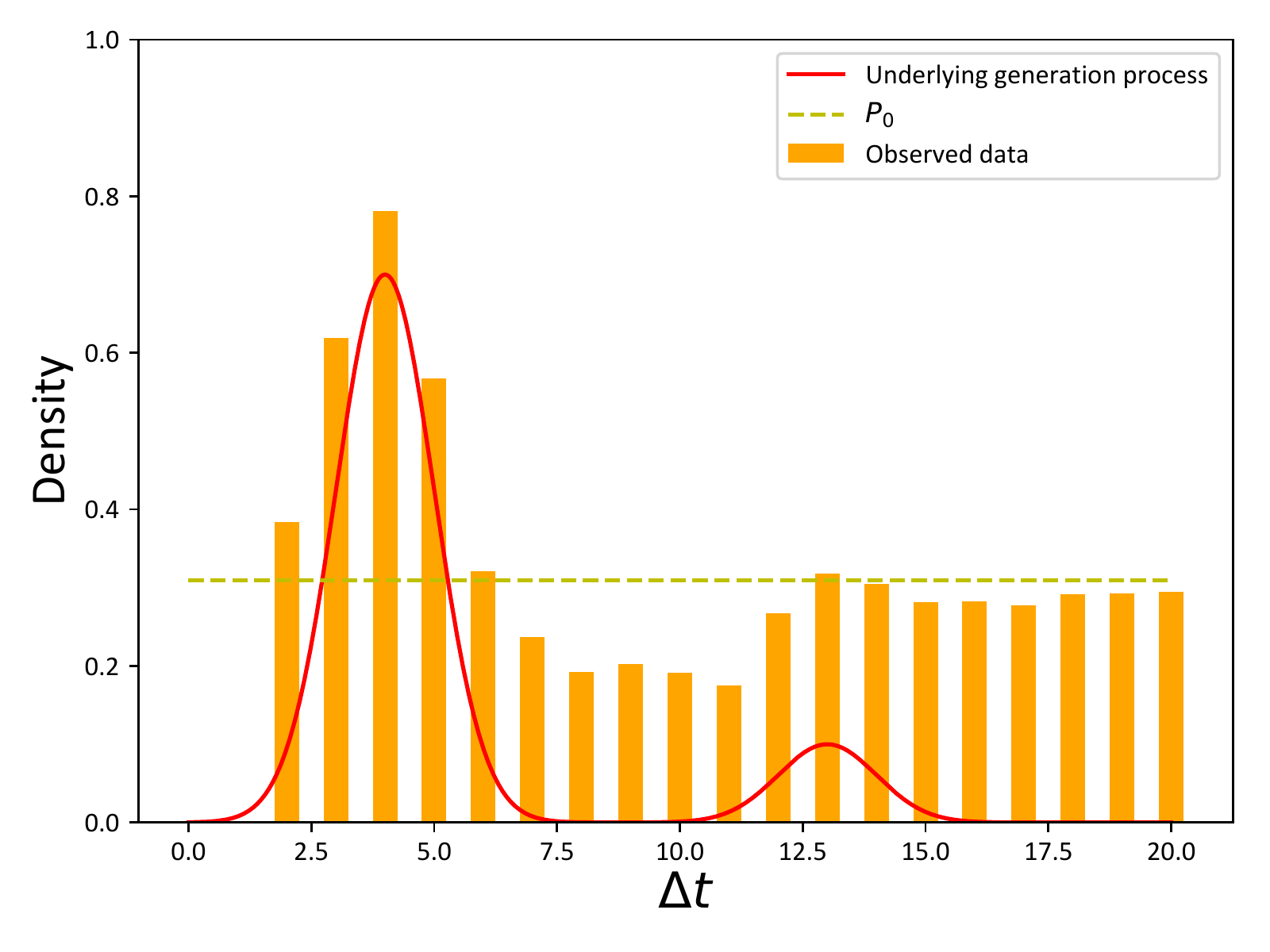}
    \caption{\textbf{Underlying generation process vs observed data} --- The red curve represents the underlying probability of contagion by C given an exposure observed $\Delta t$ steps before C. The orange bars represent the observed probability of such events. We see that there is a noise $P_0(C)$ in the observed data. The underlying generation process can then only be observed in the dataset when its effect is larger than some threshold $P_0(C)$.}
    \label{fig:FigTrueVsObs}
\end{figure}
Because the dataset is built looking at all exposure-contagion correlations in a sequence, there is inherent noise in the resulting data. To illustrate this, we look at the illustrated example Fig.\ref{fig:FigProc} and consider the exposure to C leading to a contagion happening at time $t$. We assume that in the underlying interaction process, the contagion by C at time $t+t_s$ took place only because C appeared at time $t$. However, when building the dataset, the contagion by C is also attributed to A appearing at times $t-\delta t$, $t-3\delta t$ and $t-6\delta t$, and to B appearing at times $t-4\delta t$ and $t-5\delta t$. It induces a noise in the data. In general, for any contagion in the dataset, several observations (pair exposure-contagion) come from the random presence of entities unrelated to this contagion. 

We now illustrate how this problem introduces noise in the data.
In Fig.\ref{fig:FigTrueVsObs}, we see that the actual underlying data generation process (probability of a contagion by C given an exposure present $\Delta t$ step earlier) does not exactly fit the collected resulting data: the data gathering process induces a constant noise whose value is noted $P_0(C)$ --that is the average probability of contagion by C. Thus the interaction effect can only be observed when its associated probability of contagion is larger than $P_0(C)$. Consequently, the performance improvement of a model that accounts for interactions may seem small compared with a baseline that only infers $P_0(C)$. That is we observe in the experimental section. However, in this context, a small improvement in performance shows an extended comprehension of the underlying interacting processes at stake (see Fig.\ref{fig:FigTrueVsObs}, where the red line obviously explains the data better than a constant baseline). Our method efficiently infers $P_0(C)$ \textit{via} a time-independent kernel function $\beta_{i,j}^{P_0(i)}$.

\subsection{Evaluation criteria}
The main difficulty in evaluating these models is that interactions might occur between a small number of entities only. It is the case here, where many pairs of entities have little to no interaction (see the Discussion section). This makes it difficult to evaluate how good a model is at capturing them. To this end, our principal metric is the residual sum of squares (\textbf{RSS}). The RSS is the sum of the squared difference between the observed and the expected frequency of an outcome. This metric is particularly relevant in our case, where interactions may occur between a small number of entities: any deviation from the observed frequency of contagion is accounted for, which is what we aim at predicting here. 
We also consider the Jensen-Shannon (\textbf{JS}) divergence; the JS divergence is a symmetric version of the Kullback–Leibler divergence, which makes it usable as a metric \cite{JSDiv}.

We finally consider the best-case F1-score (\textbf{BCF1}) of the models, that is, the F1-score of the best scenario of evaluation. It is not the standard F1 metric (that poorly distinguishes the models since few interactions occur), although its computation is similar. Explicitly, it generalizes F1-score for comparing probabilities instead of comparing classifications; the closer to 1, the closer the inferred and observed probabilities. It is derived from the best-case confusion matrix, whose building process is as follows: we consider the set of every information that appeared before information i at time $t_i$ in the interval, that we denote $\mathcal{H}_i$. We then compute the contagion probability of i at time $t_i+t_s$ to every exposure event $t_j^{(y)} \in \mathcal{H}_i$. Confronting this probability with the observed frequency f of contagions of i at time $t_i+t_s$ given $t_j^{(y)}$ among N observations, we can build the best-case confusion matrix. 
In the best case scenario, if out of N observations the observed frequency is f and the predicted frequency is p, the number of True Positives is $N \times \min \{ p, f \}$, the number of False Positives is $N \times \min \{ p-f, 0 \}$, the number of True Negatives is $N \times \min \{ 1-p, 1-f \}$, the number of False Positives is $N \times \min \{ f-p, 0 \}$.

Finally, when synthetic data is considered, we also compute the mean squared error of the $\beta$ matrix inferred according to the $\beta$ matrix used to generate the observations, that we note MSE $\beta$.

We purposely ignore evaluation in prediction because, as we show later, interactions influence quickly fades over time: probabilities of contagion at large times are mainly governed by the background noise discussed in previous sections. Therefore, it would be irrelevant to evaluate our approach's predictive power on the whole range of times where it does not bring any improvement over a naive baseline (see Fig.\ref{fig:FigDistrib}). A way to alleviate this problem would be to make predictions only when interactions effects are above/below a certain threshold (at short times, for instance). However, such an evaluation process would be debatable. Here, we choose to focus on the descriptive aspect of InterRate.

\subsection{Baselines}
\subsubsection{Naive baseline}
For a given piece of information i, the contagion probability is defined as the number of times this information is contaminated divided by its number of occurrences.
\subsubsection{Clash of the contagions}
We use the work presented in \cite{CoC} as a baseline. In this work, the authors model the probability of a retweet given the presence of a tweet in a user's feed. This model does not look for trends in the way interactions take place (it does not infer an interaction profile), considers discrete time steps (while our model works in a continuous-time framework), and is optimized via a non-convex SGD algorithm (which does not guarantee convergence towards the optimal model). More details on implementation are provided in SI file.

\subsubsection{IMMSBM}
The Interactive Mixed-Membership Stochastic Block Model (IMMSBM) is a model that takes interactions between pieces of information into account to compute the probability of a (non-)contagion \cite{IMMSBM}. Note that this baseline does not take the position of the interacting pieces of information into account (time-independent) and assumes that interactions are symmetric (the effect of A on B is the same as B on A).

\subsubsection{ICIR}
The Independent Cascade InterRate (ICIR) is a reduction of our main IR-RBF model to the case where interactions are not considered. We consider the same dataset, enforcing the constraint that off-diagonal terms of $\beta$ are null. The (non-)contagion of a piece of information i is then determined solely by the previous exposures to i itself.

\section{Results}
\subsection{Synthetic data}

\newlength{\lgCase}
\comment{
\begin{table}
\centering
\caption{\textbf{Experimental results} --- Darker is better (linear scale). Our model outperforms all of the baselines in almost every dataset for every evaluation metric. The standard deviations of the 5 folds cross-validation are negligible and reported in SI file. \label{tabMetricsSynth}}

\setlength{\lgCase}{2.5cm}
\begin{tabular}{p{0.21\lgCase}p{\lgCase}|p{0.87\lgCase}|p{1.2\lgCase}|p{0.7\lgCase}|p{0.7\lgCase}|p{0\lgCase}}
\cline{3-6}
   & & \,\,\,\,\,\,\,\,\,\,\,\,\,\,RSS & \,\,\,\,\,\,\,\,\,\,\,\,\,\,\,\,\,\,\,JS div. & \,\,\,\,\,\,\,\,BCF1 & \,\,\,\,\,\,\,\,MSE\,$\beta$ & \\
 \cline{1-6}
\end{tabular}

\begin{tabular}{|p{0.2\lgCase}|p{\lgCase}|P{170.}{18.4151}{table-column-width=0.87\lgCase,round-precision=4}|P{0.015}{0.00228}{table-column-width=1.2\lgCase,round-precision=5}|P{0.8}{0.919}{table-column-width=0.7\lgCase,round-precision=3}|P{0.025}{0.001}{table-column-width=0.7\lgCase,round-precision=3}|p{0\lgCase}}
    \multirow{4}{*}{\rotatebox[origin=c]{90}{\textbf{Synth-20}}} &  IR-RBF &   18.415145 &  0.0022842 &  0.918832 & 0.00050327 & \\
    
     &  ICIR &  139.5926 &  0.0099801 & 0.8270477 &   0.0158811 & \\
     
     \cdashline{2-6}
     
     &  Naive & 145.5132 &  0.0103785 & 0.822139 &  - & \\
     
     &  CoC &  123.0583 &  0.0093838 &  0.8220157 &  - & \\
     
     &  IMMSBM &  222.055495 &  0.0172875 & 0.7265413 &   - & \\
     
 \cline{1-6}
\end{tabular}
\begin{tabular}{|p{0.2\lgCase}|p{\lgCase}|P{10}{0.1154}{table-column-width=0.87\lgCase,round-precision=4}|P{0.013}{0.00020}{table-column-width=1.2\lgCase,round-precision=5}|P{0.8}{0.976}{table-column-width=0.7\lgCase,round-precision=3,omit-uncertainty=true}|P{0.03}{0.005}{table-column-width=0.7\lgCase,round-precision=3}|p{0\lgCase}}
    \multirow{4}{*}{\rotatebox[origin=c]{90}{\textbf{Synth-5}}} &  IR-RBF &  0.1169267 &  0.0002174 & 0.9742133 &   0.0052977 & \\
    
     &  ICIR &  8.2660744 &  0.0081174 & 0.8498637 & 0.0192061&  \\
    
     \cdashline{2-6}
     
     &  Naive & 10.0264254 &  0.0099556 & 0.8214128 &  - & \\
     
     &  CoC & 0.1154297 &  0.0001974 & 0.9762872 &  - & \\
     
     &  IMMSBM &   11.6936415 &  0.0136223 & 0.7692580 &  - & \\
     
 \cline{1-6}
\end{tabular}
\end{table}
}

\begin{figure}
    \centering
    \includegraphics[width=\textwidth]{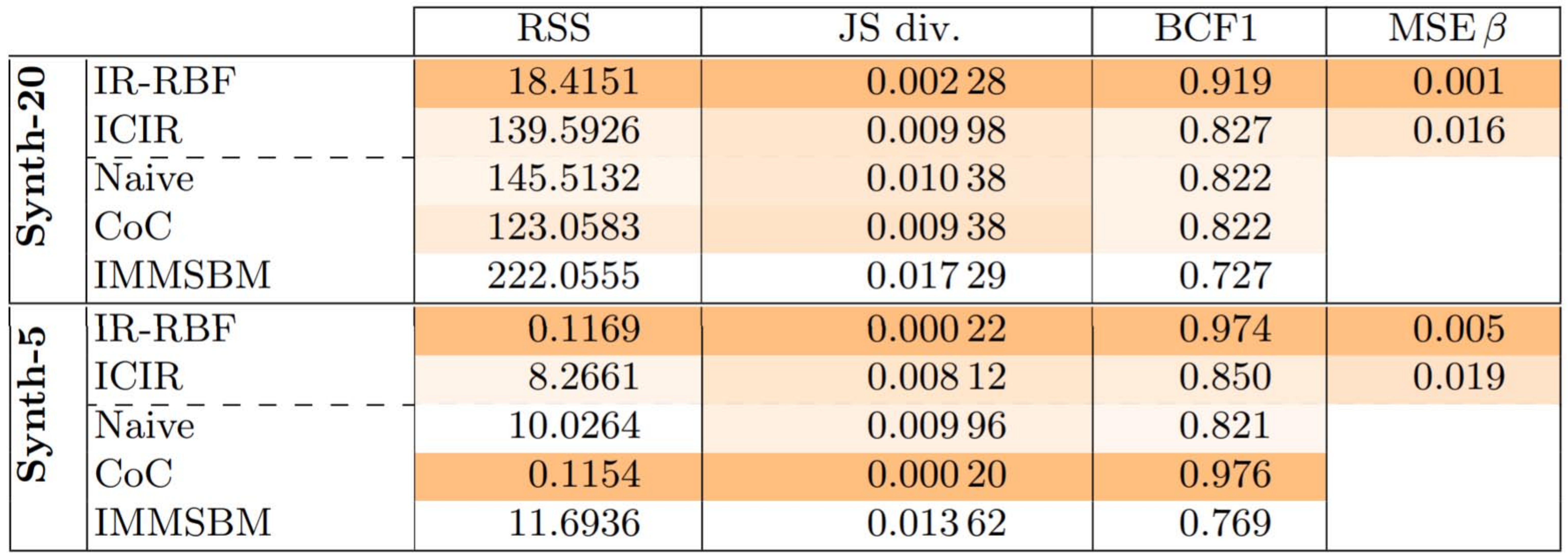}
    \caption{\textbf{Experimental results} --- Darker is better (linear scale). Our model outperforms all of the baselines in almost every dataset for every evaluation metric. The standard deviations of the 5 folds cross-validation are negligible and reported in SI file. \label{tabMetricsSynth}}
\end{figure}

We generate synthetic data according to the process described in Fig.\ref{fig:FigProc} for a given $\beta$ matrix using the RBF kernel family. First, we generate a random matrix $\beta$, whose entries are between 0 and 1. A piece of information is then drawn with uniform probability and can result in a contagion according to $\beta$, the RBF kernel family and its history. We simulate the outcome by drawing a random number and finally increment the clock. The process then keeps on by randomly drawing a new exposure and adding it to the sequence. We set the maximum length of intervals to 50 steps and generate datasets of 20,000 sequences. 

We present in Tab.~\ref{tabMetricsSynth} the results of the various models with generated interactions between 20 (Synth-20) and 5 (Synth-5) entities. The interactions are generated using the RBF kernel, hence the fact we are not evaluating the IR-EXP model --its use would be irrelevant.
The InterRate model outperforms the proposed baselines for every metric considered. It is worth noting that performances of non-interacting and/or non-temporal baselines are good on the JS divergence and F1-score metrics due to the constant background noise $P_0$. For cases where interactions do not play a significant role, IMMSBM and Naive models perform well by fitting only the background noise. By contrast, the RSS metric distinguishes very well the models that are better at modeling interactions.

Note that while the baseline \cite{CoC} yields good results when few interactions are simulated (Synth-5), it performs as bad as the naive baseline when this number increases (Synth-20). This is due to the non-convexity of the proposed model, which struggles to reach a global maximum of the likelihood even after 100 runs (see supplementary materials for implementation details).

\subsection{Real data}

\comment{
\begin{table}
\centering
\caption{\textbf{Experimental results} --- Darker is better (linear scale). Our model outperforms all of the baselines in almost every dataset for every evaluation metric. The standard deviations of the 5 folds cross-validation are negligible and reported in SI file. \label{tabMetricsRW}}

\setlength{\lgCase}{2.8cm}
\begin{tabular}{p{0.21\lgCase}p{\lgCase}|p{0.87\lgCase}|p{1.2\lgCase}|p{0.7\lgCase}|p{0\lgCase}}
\cline{3-5}
   & & \,\,\,\,\,\,\,\,\,\,\,\,\,\,\,\,\,RSS & \,\,\,\,\,\,\,\,\,\,\,\,\,\,\,\,\,\,\,\,\,\,\,JS div. & \,\,\,\,\,\,\,\,\,\,BCF1 & \\
 \cline{1-5}
\end{tabular}
\begin{tabular}{|p{0.2\lgCase}|p{\lgCase}|P{0.0023}{0.0011}{table-column-width=0.87\lgCase,round-precision=4}|P{0.0001}{0.00005}{table-column-width=1.2\lgCase,round-precision=5}|P{0.94}{0.986}{table-column-width=0.7\lgCase,round-precision=3}|p{0\lgCase}}
    \multirow{4}{*}{\rotatebox[origin=c]{90}{\textbf{Twitter\,\,\,\,\,\,}}} &  IR-RBF &  0.0014676 &  0.0000582 & 0.983202 & \\
     
     &  IR-EXP &  0.0011359 &  0.0000488 & 0.986201 & \\
     
     &  ICIR &  0.0137100 &  0.0006293 & 0.961401 & \\
     
     \cdashline{2-5}
     
     &  Naive &  0.0160866 &  0.0007252 & 0.9379499 & \\
     
     &  CoC &  0.0016765 &  0.0000672 & 0.9572230 & \\
     
     &  IMMSBM &  0.0147305 &  0.0006829 & 0.9542923 & \\
     
 \cline{1-5}
\end{tabular}

\begin{tabular}{|p{0.2\lgCase}|p{\lgCase}|P{1.8}{1.1268}{table-column-width=0.87\lgCase,round-precision=4}|P{0.0095}{0.00758}{table-column-width=1.2\lgCase,round-precision=5}|P{0.96}{0.979}{table-column-width=0.7\lgCase,round-precision=3}|p{0\lgCase}}
     \multirow{4}{*}{\rotatebox[origin=c]{90}{\textbf{PD\,\,\,\,}}} &  IR-RBF &  1.12679 &  0.007583 & 0.978903 & \\
     
     &  IR-EXP &  1.552556 &  0.0086686 & 0.966056 & \\
     
     &  ICIR &  3.53585 &  0.018225 & 0.93812 & \\
     
     \cdashline{2-5}
     
     &  Naive &  3.6527 &  0.0191466 & 0.94545 & \\
     
     &  CoC &  1.2408573 &  0.0080876 & 0.9736453 & \\
     
     &  IMMSBM &  20.3773 &  0.0870104 & 0.767153 & \\
     
 \cline{1-5}
\end{tabular}

\begin{tabular}{|p{0.2\lgCase}|p{\lgCase}|P{0.006}{0.003}{table-column-width=0.87\lgCase,round-precision=4}|P{0.00009}{0.00003}{table-column-width=1.2\lgCase,round-precision=5}|P{0.965}{0.985}{table-column-width=0.7\lgCase,round-precision=3}|p{0\lgCase}}

      \multirow{4}{*}{\rotatebox[origin=c]{90}{\textbf{Ads\,\,\,\,\,\,}}} &  IR-RBF &   0.004338 &  0.0000432 & 0.98143 & \\
     
     &  IR-EXP &  0.003016 &  0.0000297 & 0.98524 & \\
     
     &  ICIR &  0.098337 &  0.0008481 & 0.96588 & \\
     
     \cdashline{2-5}
     
     &  Naive &   0.1453111 &  0.0012629 & 0.91258 & \\
     
     &  CoC &  0.0045146 &  0.0000451 & 0.9741153 & \\
     
     &  IMMSBM &   0.015465 &  0.0001531 & 0.95427 & \\
     
 \cline{1-5}
\end{tabular}
\end{table}
}

\begin{figure}
    \centering
    \includegraphics[width=\textwidth]{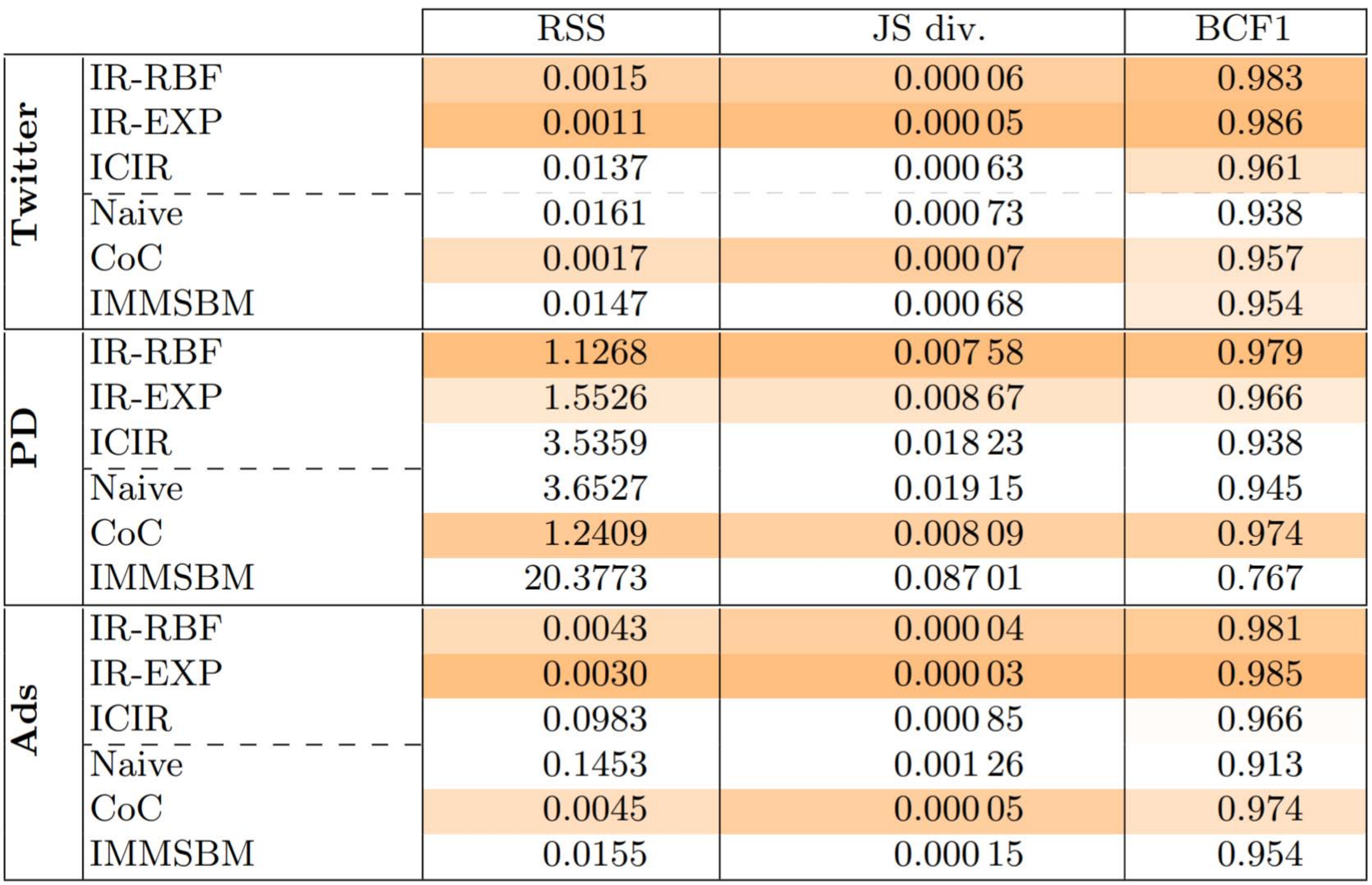}
    \caption{\textbf{Experimental results} --- Darker is better (linear scale). Our model outperforms all of the baselines in almost every dataset for every evaluation metric. The standard deviations of the 5 folds cross-validation are negligible and reported in SI file. \label{tabMetricsRW}}
\end{figure}

We consider 3 real-world datasets. For each dataset, we select a subset of entities that are likely to interact with each other. For instance, it has been shown that the interaction between the various URL shortening services on Twitter is non-trivial \cite{CorrelatedCascade}.
The datasets are a \textbf{Twitter} dataset (104,349 sequences\comment{ of average length 53.5 steps}, exposure are tweets and contagions are retweets) \cite{DataSetTwitter}, a Prisoner's dilemma dataset (\textbf{PD}) (2,337 sequences\comment{ of average length 9.0 steps}, exposures are situations, contagions are players choices) \cite{PrisonerDilemaDataset1,PrisonerDilemaDataset2} and an \textbf{Ads} dataset (87,500 sequences\comment{ of average length 23.9 steps}, exposures are ads, contagions are clicks on ads) \cite{AdsDataset}. A detailed description of the datasets is presented in SI file, section Datasets.

\comment{
We provide details on the way datasets have been built from raw data. For each of the real-world datasets, we choose to consider only the order of the various entities' apparition instead of their absolute appearance times. It implies setting the time separating two successive exposures as constant, that we note $\delta t$. This choice is supported by state-of-the-art works \cite{CoC}, and we observed in our own experiments that it is more relevant than considering absolute times. Besides, we do not consider the first 10 pieces of information of any sequence to avoid boundary effects (the first 5 steps for the PD dataset): the history of exposures is incomplete in this case and could lead to biased results.
For each dataset entities list, the number before the entity name is the key used in Fig\ref{fig:FigAnalInter}. The entities subsets have been chosen by computing the co-occurrence matrix of all the entities and then select the ones that are part of a cluster using a K-mean algorithm. The datasets are:
\begin{itemize}
    \item \textbf{Twitter} dataset \cite{DataSetTwitter}: it consists of a collection of all the tweets containing URLs that have been posted on Twitter during October 2010, with the associated followers network. A tweet read by a user in her feed is an exposition, and its possible retweet is a contagion. We consider only the URLs associated with the following URL shortening websites, the same as in \cite{CorrelatedCascade}: \{0: migre.me, 1: bit.ly, 2: tinyurl, 3: t.co\}. The final dataset is made of 104,349 sequences of average length 53.5 steps (1 step = $t_s$), for 1,276,670,965 observed interactions.

    \item Prisoner's dilemma dataset (\textbf{PD}): contains ordered sequences of repeated Prisoner's dilemma game between two players. From the dataset introduced in \cite{PrisonerDilemaDataset1}, we consider the sub-dataset noted BR-risk 0 (first entry of Tab.2 in the reference); we choose this subset in order to have decisions made in a homogeneous context, where players struggle in a dilemma that is hard to solve (which depends on the combination of the parameters T, R, S and P discussed further). Within each round, the two players can defect or cooperate. Each duel is made of 10 rounds. If both cooperate, the reward R is high; if both defect, the reward P is low; if one player cooperates while the other defects, this one gets a penalty S, while the other gets a reward T. To make the game a Prisoner dilemma, the variables have to obey T$>$R$>$P$>$S.
    We refer to the combination of players' actions ("the user cooperated and the opponent defected at time t") as exposures and to the \textit{defect} actions of the player in the following round as a contagion. We defined the action of cooperating as a non-contagion. We therefore have 4 possible situations (\{0: Player cooperated and opponent defected, 1: Both players defected, 2: Both players cooperated, 3: Player defected and opponent cooperated\}) and 2 possible outcomes (Player cooperates or defects). The final dataset is made of 2,337 sequences of average length 10.0 steps, for 189,297 observed interactions.
    
    \item Taobao dataset (\textbf{Ads}): contains all ads exposures for 1,140,000 randomly sampled users from the website of Taobao for 8 days (5/6/2017-5/13/2017) \cite{AdsDataset}. Taobao is one of the largest e-commerce websites and is owned by Alibaba. Each exposure is associated with the corresponding timestamp and user's action (click on the ad or not). A click is considered a contagion. The subset of ads we consider is: \{0: 4520, 1: 4280, 2: 1665, 3: 4282\}. The resulting dataset is made 87,500 sequences of average length 23.9 steps, for 240,932,401 observed interactions.
    
\end{itemize}
}

The results on real-world datasets are presented in Tab.\ref{tabMetricsRW}. We see that the IMMSBM baseline performs poorly on the PD dataset: either considering the time plays a consequent role in the probability of contagion, or interactions are not symmetric. Indeed, the core hypothesis of the IMMSBM is that the effect of exposition A on B is the same as B on A, whichever is the time separation between them. In a prisoner's dilemma game setting, for instance, we expect that a player does not react in the same way to defection followed by cooperation as to cooperation followed by defection, a situation for which the IMMSBM does not account. When there are few entities, the CoC baseline performs as good as IR, but fails when this number increases; this is due mainly to the non-convexity of the problem that does not guarantee convergence towards the optimal solution.
Overall, the InterRate models yield the best results on every dataset. 

\section{Discussion}
In Fig.\ref{fig:FigAnalInter}, we represent the interaction intensity over time for every pair of information considered in every corpus fitted with the RBF kernel model. The intensity of the interactions is the inferred probability of contagion minus the base contagion probability in any context: $P_{ij}(t)-P_0(i)$. Therefore, we can determine the characteristic range of interactions, investigate recurrent patterns in interactions, whether the interaction effect is positive or negative, etc. 
\begin{figure}[t!]
    \centering
    \includegraphics[width=1.\columnwidth]{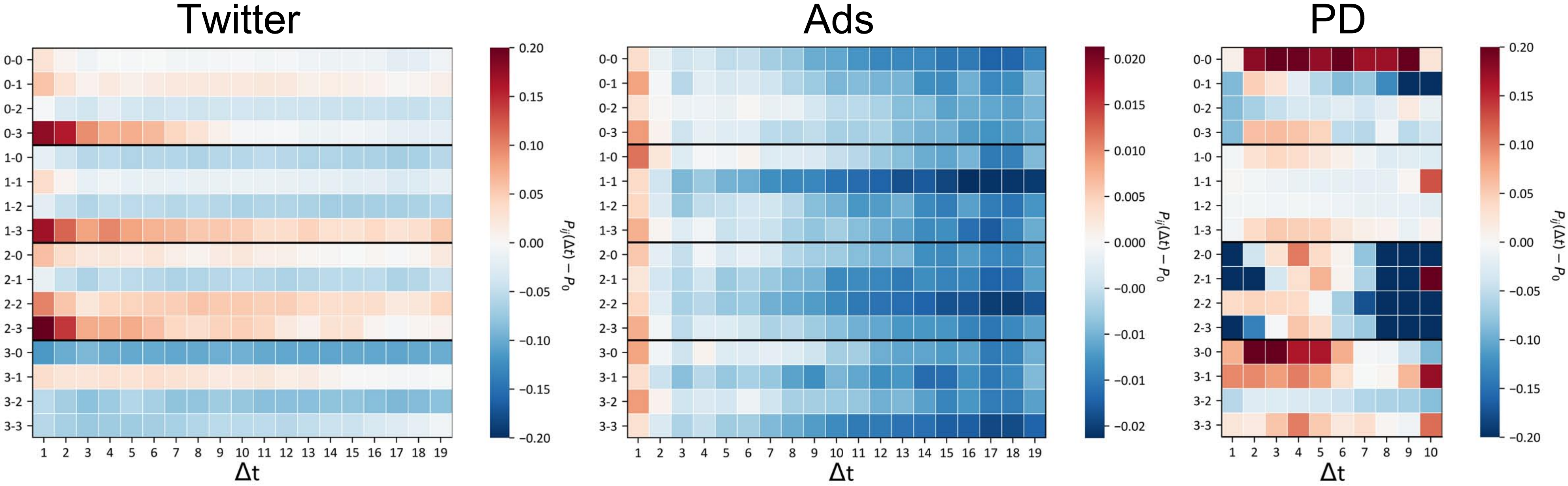}
    \caption{\textbf{Visualization of the interaction profiles} --- Intensity of the interactions between every pair of entities according to their time separation (one line is one pair's interaction profile, similar to Fig.\ref{fig:FigDistrib} seen "from the top"). A positive intensity means that the interaction helps the contagion, while a negative intensity means it blocks it. The key linking numbers on the y-axis to names for each dataset is provided in the Datasets section.}
    \label{fig:FigAnalInter}
\end{figure}
Overall, we understand why the EXP kernel performs as good as the RBF on the Twitter and Ads datasets: interactions tend to have an exponentially decaying influence over time. However, this is not the case on the PD dataset: the effect of a given interaction is very dependent on its position in the history (pike on influence at $\Delta t=3$, shift from positive to negative influence, etc.). 

In the Twitter dataset, the most substantial positive interactions occur before $\Delta$t=3. This finding agrees with previous works, which stated that the most informative interactions within Twitter URL dataset occur within the 3 time steps before the possible retweet \cite{CoC}. We also find that the vast majority of interactions are weak, matching with previous study's findings \cite{CoC,IMMSBM}. However, it seems that tweets still exert influence even a long time after being seen, but with lesser intensity.

In the Prisoner's Dilemma dataset, players' behaviors are heavily influenced by the previous situations they have been exposed to. For instance, in the situation where both players cooperated in the previous round (pairs 2-x, 3$^{rd}$ section in Fig.\ref{fig:FigAnalInter}-PD). The probability that the player defects is then significantly increased if both players cooperated or if one betrayed the other exactly two rounds before but decreased if it has been two rounds that players both cooperate. 

Finally, we find that the interactions play a lesser role in the clicks on ads. We observe a slightly increased probability of click on every ad after direct exposure to another one. We also observe a globally decreasing probability of click when two exposures distant in time, which agrees with previous work's findings \cite{AdsDataset}. Finally, the interaction profile is very similar for every pair of ads; we interpret this as a similarity in users' ads perception.

We showed that for each of the considered corpus, considering the interaction profile provides an extended comprehension of choice adoption mechanisms and retrieves several state-of-the-art conclusions. The proposed graphical visualization also provides an intuitive view of how the interaction occurs between entities and the associated trends, hence supporting its relevance as a new tool for researchers in a broad meaning.

\section{Conclusion}
We introduced an efficient convex model to investigate the way interactions happen within various datasets. Interactions modeling has been little explored in data science, despite recent clues pointing to their importance in modeling real-world processes \cite{CorrelatedCascade,CoC}. Unlike previous models, our method accounts for both the interaction effects and their influence over time (the interaction profile). We showed that this improvement leads to better results on synthetic and various real-world datasets that can be used in different research fields, such as recommender systems, spreading processes, and human choice behavior.
We also discussed the difficulty of observing significant interaction profiles due to the data-gathering process's inherent noise and solved the problem by introducing a time-independent kernel. We finally proposed a way to easily explore the results yielded by our model, allowing one to read the interaction profiles of any couple of entities quickly. 

In future works, we will explore the way interactions vary over time and work on identifying recurrent patterns in interaction profiles. It would be the next step for an extended understanding of the role and nature of interacting processes in real-world applications.

%
%
%
\bibliographystyle{splncs04}
\bibliography{mybibliography}

\begin{thebibliography}{10}
\providecommand{\url}[1]{\texttt{#1}}
\providecommand{\urlprefix}{URL }
\providecommand{\doi}[1]{https://doi.org/#1}

\bibitem{PrisonerDilemaDataset2}
Bereby-Meyer, Y., Roth, A.E.: The speed of learning in noisy games: Partial
  reinforcement and the sustainability of cooperation. American Economic Review
   \textbf{96}(4),  1029--1042 (2006)

\bibitem{InteractingViruses}
Beutel, A., Prakash, B.A., Rosenfeld, R., Faloutsos, C.: Interacting viruses in
  networks: Can both survive? SIGKDD p. 426–434 (2012)

\bibitem{AdsDataset}
Cao, J., Sun, W.: Sequential choice bandits: learning with marketing fatigue.
  AAAI-19  (2019)

\bibitem{SergioSocialDilemma}
Cobo-López, S., Godoy-Lorite, A., Duch, J.: Optimal prediction of decisions
  and model selection in social dilemmas using block models. EPJ Data Sci
  \textbf{7(48)} (2018). \doi{10.1140/epjds/s13688-018-0175-3}

\bibitem{KernelCascade}
Du, N., Song, L., Smola, A., Yuan, M.: Learning networks of heterogeneous
  influence. NIPS  (2012)

\bibitem{NetRate}
Gomez-Rodriguez, M., Balduzzi, D., Sch\"{o}lkopf, B.: Uncovering the temporal
  dynamics of diffusion networks. In: ICML. p. 561–568 (2011)

\bibitem{InfoPath}
Gomez-Rodriguez, M., Leskovec, J., Schoelkopf, B.: Structure and dynamics of
  information pathways in online media. WSDM  (2013)

\bibitem{GRSurvivalAnalysis}
Gomez-Rodriguez, M., Leskovec, J., Sch\"{o}lkopf, B.: Modeling information
  propagation with survival theory. In: ICML. vol.~28, p. III–666–III–674
  (2013)

\bibitem{PaperHawkes}
Hawkes, A.: Point spectra of some mutually exciting point processes. JRSS.
  Series B  \textbf{33} (07 1971). \doi{10.1111/j.2517-6161.1971.tb01530.x}

\bibitem{DataSetTwitter}
Hodas, N.O., Lerman, K.: The simple rules of social contagion. Scientific
  Reports  \textbf{4}(4343) (2014)

\bibitem{Weng2012}
L., W., A., F., al.: Competition among memes in a world with limited attention.
  Nature Sci Rep 2  \textbf{335} (2012). \doi{10.1038/srep00335}

\bibitem{CoC}
Myers, S., Leskovec, J.: Clash of the contagions: Cooperation and competition
  in information diffusion. IEEE pp. 539--548 (2012)

\bibitem{PrisonerDilemaDataset1}
Nay, J.J., Vorobeychik, Y.: Predicting human cooperation. PLoS One
  \textbf{11}(5) (2016)

\bibitem{JSDiv}
Nielsen, F.: On a generalization of the jensen–shannon divergence and the
  jensen–shannon centroid. Entropy  \textbf{22}, ~221 (02 2020).
  \doi{10.3390/e22020221}

\bibitem{pouxmdard2019influential}
Poux-M\'edard, G., Pastor-Satorras, R., Castellano, C.: Influential spreaders
  for recurrent epidemics on networks. Phys. Rev. Research  \textbf{2},  023332
  (Jun 2020). \doi{10.1103/PhysRevResearch.2.023332}

\bibitem{IMMSBM}
Poux-Médard, G., Velcin, J., Loudcher, S.: Interactions in information spread:
  quantification and interpretation using stochastic block models. arXiv
  (2020)

\bibitem{WinnerTakesAll}
Prakash, B., Beutel, A., Rosenfeld, R., Faloutsos, C.: Winner takes all:
  Competing viruses or ideas on fair-play networks. WWW  (04 2012).
  \doi{10.1145/2187836.2187975}

\bibitem{rodriguez2014}
Rodriguez, M., Gummadi, K.P., Schölkopf, B.: Quantifying information overload
  in social media and its impact on social contagions. ICWSM  (03 2014)

\bibitem{InfluenzaSpread}
Senanayake, R., O’Callaghan, S., Ramos, F.: Predicting spatio–temporal
  propagation of seasonal influenza using variational gaussian process
  regression. AAAI p. 3901–3907 (2016)

\bibitem{SpreadNewsOnlineScience}
Vosoughi, S., Roy, D., Aral, S.: The spread of true and false news online.
  Science  \textbf{359},  1146--1151 (03 2018). \doi{10.1126/science.aap9559}

\bibitem{Yu2017}
Yu, L., Cui, P., Song, C., Zhang, T., Yang, S.: A temporally heterogeneous
  survival framework with application to social behavior dynamics. KDD'17 p.
  1295–1304 (2017). \doi{10.1145/3097983.3098189}

\bibitem{Zhu2020}
Z., Z., C., G., Y, Z.: Cooperation and competition among information on social
  networks. Nature Sci Rep  \textbf{3103},  12160 (07 2020).
  \doi{10.1038/s41598-020-69098-5}

\bibitem{CorrelatedCascade}
Zarezade, A., Khodadadi, A., Farajtabar, M., Rabiee, H.R., Zha, H.: Correlated
  cascades: Compete or cooperate. AAAI  (2017)

\end{thebibliography}

\end{document}


%
\title{Supplementary Material for Information Interaction Profile of Choice Adoption}
%
%
%
%
%
\maketitle              
%

\section{Datasets}
We provide details on the way datasets have been built from raw data. For each of the real-world datasets, we choose to consider only the order of apparition of the various entities instead of their absolute appearance times. This implies setting the time separating two successive exposures as constant, that we note $\delta t$. This choice is supported by state-of-the-art works \cite{CoC} and we observed in our own experiments that it is more relevant than considering absolute times. Besides, we do not consider the first 10 pieces of information of any sequence to avoid boundary effects (the first 5 steps for the PD dataset): the history of exposures is incomplete in this case and could lead to biased results.
For each dataset entities list, the number before the entity name is the key used in Fig.4 of the main article. The entities subsets have been chosen by computing the co-occurrence matrix of all the entities, and then select the ones that are part of a cluster using a K-mean algorithm. The datasets are:

\begin{itemize}
    \item \textbf{Twitter} \cite{DataSetTwitter}: it consists in a collection of all the tweets containing URLs that have been posted on Twitter during October 2010, with the associated followers networks. A tweet read by a user in her feed is an exposition, and its possible retweet is a contagion. We consider only the URLs associated with the following URL shortening websites, the same as in \cite{CorrelatedCascade}: \{0: migre.me, 1: bit.ly, 2: tinyurl, 3: t.co\}. The final dataset is made of 104,349 sequences of average length 53.5 steps (1 step = $t_s$), for 1,276,670,965 observed interactions.
    \item Prisoner's dilemma dataset (\textbf{PD}) \cite{PrisonerDilemaDataset1,PrisonerDilemaDataset2}: contains ordered sequences of repeated Prisoner's dilemma game between two players. From the dataset introduced in \cite{PrisonerDilemaDataset1}, we consider the sub-dataset noted BR-risk 0 (first entry of Tab.2 in the reference); we choose this subset in order to have decisions made in an homogeneous context, where players struggle in a dilemma that is hard to solve (which depends on the combination of the parameters T, R, S and P discussed further). Within each round, the players can either defect of cooperate. Each duel is made of 10 rounds. If both cooperate, the reward R is high, if they both defect, the reward P is low, if one player cooperates while the other defects, this one gets a penalty S, while the other gets a reward T. To make the game a Prisoner dilemma, the variables have to obey T$>$R$>$P$>$S.
    We refer to the combination of players' actions ("the user cooperated and the opponent defected at time t") as exposures, and to the \textit{defect} actions of the player in the following round as a contagion. We defined the action of cooperating as a non-contagion. We therefore have 4 possible situations (\{0: Player cooperated and opponent defected, 1: Both players defected, 2: Both players cooperated, 3: Player defected and opponent cooperated\}) and 2 possible outcomes (Player cooperates or defects). The final dataset is made of 2,337 sequences of average length 10.0 steps, for 189,297 observed interactions.
    \item Taobao dataset (\textbf{Ads}): contains all ads exposures for 1,140,000 randomly sampled users from the website of Taobao for 8 days (5/6/2017-5/13/2017) \cite{AdsDataset}. Taobao is one of the largest e-commerce websites, and is owned by Alibaba. To each exposure is associated the corresponding timestamp and the action of the user (click on the ad or not). A click is considered as a contagion. The subset of ads we consider is: \{0: 4520, 1: 4280, 2: 1665, 3: 4282\}. The resulting dataset is made 87,500 sequences of average length 23.9 steps, for 240,932,401 observed interactions.
\end{itemize}

\section{Implementation of Clash of the Contagions}
In this appendix, we provide technical details on the way the Clash of Contagions baseline is implemented. Following the directions given in the reference article \cite{CoC}, we implemented a Stochastic Gradient Descent (SGD) method for parameters inference. Given the small number of entities considered in the experiments, each iteration of the SGD is computed using the full dataset instead of slicing it into mini-batches. 

\subsubsection{Setup}
For each corpus, we run the SGD algorithm 100 times, from which we save the parameters maximizing the likelihood the most. At the beginning of each run, parameters M and $\Delta$ are randomly initialized. The stopping condition makes the algorithm ends when the relative variation of the likelihood according to the last iteration is been lesser than $10^{-6}$ for more than 30 times in a row; those numbers have been chosen empirically to maximize the performances of the algorithm. The hyper-parameters have been set to: T=5 (number of clusters) and K=20 (number of considered time steps).

\subsubsection{Update rule}
In each iteration, we update the parameters in the direction of the gradient descent (noted G). However, a major problem when dealing with SGD is to choose the line step length $\eta$ (the amplitude of the variation of the parameters in the direction of the gradient G). After each iteration, we compare several update rules, and we select the one maximizing the likelihood. Those rules are as follows:
\begin{itemize}
\item AdaGrad: $\eta^{AG} \times G$
\item AdaDelta: $\eta^{AD} \times G$
\item Line search in the direction of the gradient: $\eta^{LS} \times G$
\item Line search in the direction of AdaDelta: $\eta^{LS} \times \eta^{AD} \times G$
\end{itemize}
The line search snippets consider 50 values of $\eta^{LS}$ logarithmically distributed in the interval $[ 10^{-6} ; 10^3 ]$. 

\subsubsection{Constraints on the parameters}
The membership vectors entries $M_{i,t}$ (membership of i to cluster t) must be positive and sum to 1 over all the clusters ($\sum_{t} M_{i,t} = 1$). In order to enforce this constraint, we consider the following variable change: $M_{i,t} \rightarrow \frac{\phi_{i,t}^2}{\sum_{t'} \phi_{i,t'}^2}$. This transformation guarantees the membership vector properties with no need for penalty methods in the implementation.

Besides, as stated in \cite{CoC}, it can happen that a probability is larger then 1 or lesser than 0. In the absence of complementary details in the main paper, we implemented our own method to force the probabilities to stay within reasonable bounds. Here it is impossible to make a simple variable change to enforce this constraint, since the probability results of a non-linear combination of the model's parameters. We added to the likelihood an exponential penalty term. Let P denote a quantity we want to constrain between 0 and 1. The penalty term equals $e^{-\lambda P} + e^{\lambda(P-1)}$. $\lambda$ here is a parameter that tunes the intensity of the penalty, and is empirically set to $\lambda=75$. This penalty function has the form of a well with very steep walls in x=0 and x=1. In this way, it seldom happens that probabilities are larger than 1 or lesser than 0, as said in the main article. When such cases happen, we simply set it back to the closest bound for methods comparisons.

\clearpage
\section{Experimental results with standard deviation}
The numerical results along with the associated standard deviation are presented Table \ref{tabMetrics}.
\begin{table*}[h!]
\centering

\caption{\textbf{Experimental results with associated standard deviation} \label{tabMetrics}}

\newlength{\lgCase}
\setlength{\lgCase}{2.8cm}
\begin{tabular}{p{0.21\lgCase}p{0.5\lgCase}|p{0.67\lgCase}|p{1.2\lgCase}|p{0.7\lgCase}|p{0.7\lgCase}|p{0\lgCase}}
\cline{3-6}
   & & \,\,\,\,\,\,\,\,\,\,\,\,RSS & \,\,\,\,\,\,\,\,\,\,\,\,\,\,\,\,\,\,\,\,\,\,\,\,JS div. & \,\,\,\,\,\,\,\,\,\,\,BCF1 & \,\,\,\,\,\,\,\,\,\,\,MSE\,$\beta$ & \\
 \cline{1-6}
\end{tabular}

\begin{tabular}{|p{0.2\lgCase}|p{0.5\lgCase}|S[table-column-width=0.67\lgCase,round-precision=4]|S[table-column-width=1.2\lgCase,round-precision=5]|S[table-column-width=0.7\lgCase,round-precision=3]|S[table-column-width=0.7\lgCase,round-precision=3]|p{0\lgCase}}
    \multirow{4}{*}{\rotatebox[origin=c]{90}{\textbf{Synth-20}}} &  IR-RBF &   \SI{18.42(25)}{} &  \SI{0.002284(45)}{} &  \SI{0.9188(1)}{} & \SI{0.00503(4)}{} & \\
    
     &  ICIR &  \SI{139.59(77)}{} &  \SI{0.009980(49)}{} & \SI{0.8271(16)}{} &   \SI{0.01588(0)}{} & \\
     
     \cdashline{2-6}
     
     &  Naive & \SI{145.51(75)}{} & \SI{0.010379(59)}{} & \SI{0.8221(2)}{} &  & \\
     
     &  CoC &  \SI{123.06(75)}{} &  \SI{0.009384(70)}{} &  \SI{0.8220(15)}{} &  & \\
     
     &  IMMSBM &  \SI{222.06(139)}{} &  \SI{0.017288(51)}{} & \SI{0.7265(7)}{} &  & \\
     
 \cline{1-6}
\end{tabular}
\begin{tabular}{|p{0.2\lgCase}|p{0.5\lgCase}|S[table-column-width=0.67\lgCase,round-precision=4]|S[table-column-width=1.2\lgCase,round-precision=5]|S[table-column-width=0.7\lgCase,round-precision=3]|S[table-column-width=0.7\lgCase,round-precision=3]|p{0\lgCase}}
    \multirow{4}{*}{\rotatebox[origin=c]{90}{\textbf{Synth-5}}} &  IR-RBF &  \SI{0.117(4)}{} &  \SI{0.000217(6)}{} & \SI{0.9742(4)}{} &   \SI{0.00530(3)}{} & \\
    
     &  ICIR &  \SI{8.266(36)}{} &  \SI{0.008117(38)}{} & \SI{0.8499(11)}{} & \SI{0.01921(1)}{} &  \\
     
     \cdashline{2-6}
     
     &  Naive & \SI{10.026(48)}{} &  \SI{0.009956(48)}{} & \SI{0.8214(17)}{} &  & \\
     
     &  CoC & \SI{0.115(23)}{} &  \SI{0.000197(31)}{} & \SI{0.9763(10)}{} &  & \\
     
     &  IMMSBM & \SI{11.694(274)}{} &  \SI{0.013622(489)}{} & \SI{0.7693(50)}{} &  & \\
     
 \cline{1-6}
\end{tabular}
\begin{tabular}{|p{0.2\lgCase}|p{0.5\lgCase}|S[table-column-width=0.67\lgCase,round-precision=4]|S[table-column-width=1.2\lgCase,round-precision=5]|S[table-column-width=0.7\lgCase,round-precision=3]|p{0.7\lgCase}p{0\lgCase}}
    \multirow{4}{*}{\rotatebox[origin=c]{90}{\textbf{Twitter\,\,\,\,\,\,}}} &  IR-RBF &  \SI{0.0015(4)}{} &  \SI{0.000058(8)}{} & \SI{0.9832(8 )}{} & & \\
     
     &  IR-EXP &  \SI{0.0011(2)}{} &  \SI{0.000049(5)}{} & \SI{0.9862(6)}{} &  & \\
     
     &  ICIR &  \SI{0.0137(8)}{} &  \SI{0.000629(48)}{} & \SI{0.9614(12)}{} &  & \\
     
     \cdashline{2-5}
     
     &  Naive &  \SI{0.0161(9)}{} &  \SI{0.000725(60)}{} & \SI{0.9379(38)}{} &  & \\
     
     &  CoC &  \SI{0.0017(2)}{} &  \SI{0.000067(4)}{} & \SI{0.9572(263)}{} &  & \\
     
     &  IMMSBM &  \SI{0.0147(13)}{} &  \SI{0.000683(73)}{} & \SI{0.9543(29)}{} &  & \\
     
 \cline{1-5}
\end{tabular}

\begin{tabular}{|p{0.2\lgCase}|p{0.5\lgCase}|S[table-column-width=0.67\lgCase,round-precision=4]|S[table-column-width=1.2\lgCase,round-precision=5]|S[table-column-width=0.7\lgCase,round-precision=3]|p{0.7\lgCase}p{0\lgCase}}
     \multirow{4}{*}{\rotatebox[origin=c]{90}{\textbf{PD\,\,\,\,}}} &  IR-RBF &  \SI{1.13(18)}{} &  \SI{0.007583(470)}{} & \SI{0.9789(27)}{} &  & \\
     
     &  IR-EXP &  \SI{1.55(31)}{} &  \SI{0.008669(1137)}{} & \SI{0.9661(22)}{} &  & \\
     
     &  ICIR &  \SI{3.54(31)}{} &  \SI{0.018225(1286)}{} & \SI{0.9381(47)}{} &  & \\
     
     \cdashline{2-5}
     
     &  Naive &  \SI{3.65(43)}{} &  \SI{0.019147(1538)}{} & \SI{0.9455(23)}{} &  & \\
     
     &  CoC &  \SI{1.24(28)}{} &  \SI{0.008088(1266)}{} & \SI{0.9736(56)}{} &  & \\
     
     &  IMMSBM &  \SI{20.38(297)}{} &  \SI{0.087010(12172)}{} & \SI{0.7672(204)}{} &  & \\
     
 \cline{1-5}
\end{tabular}

\begin{tabular}{|p{0.2\lgCase}|p{0.5\lgCase}|S[table-column-width=0.67\lgCase,round-precision=4]|S[table-column-width=1.2\lgCase,round-precision=5]|S[table-column-width=0.7\lgCase,round-precision=3]|p{0.7\lgCase}p{0\lgCase}}

      \multirow{4}{*}{\rotatebox[origin=c]{90}{\textbf{Ads\,\,\,\,\,\,}}} &  IR-RBF &   \SI{0.0043(6)}{} &  \SI{0.000043(6)}{} & \SI{0.9814(19)}{} & & \\
     
     &  IR-EXP &  \SI{0.0030(3)}{} &  \SI{0.000030(2)}{} & \SI{0.9852(7)}{} &  & \\
     
     &  ICIR &  \SI{0.0983(70)}{} &  \SI{0.000848(63)}{} & \SI{0.9659(11)}{} &  & \\
     
     \cdashline{2-5}
     
     &  Naive &   \SI{0.1453(106)}{} &  \SI{0.001263(95)}{} & \SI{0.9126(59)}{} & & \\
     
     &  CoC &  \SI{0.0045(2)}{} &  \SI{0.000045(2)}{} & \SI{0.9741(40)}{} &  & \\
     
     &  IMMSBM &   \SI{0.0155(18)}{} &  \SI{0.000153(15)}{} & \SI{0.9543(16)}{} & & \\
     
 \cline{1-5}
\end{tabular}
\end{table*}

\bibliographystyle{splncs04}
\bibliography{mybibliography}